\newif\ifanonymous
\newif\ifdraft
\DeclareRobustCommand{\defeq}{\mathrel{\rlap{%
  \raisebox{0.3ex}{$\m@th\cdot$}}%
  \raisebox{-0.3ex}{$\m@th\cdot$}}%
  =}
\DeclareRobustCommand{\eqdef}{=\mathrel{\rlap{%
  \raisebox{0.3ex}{$\m@th\cdot$}}%
  \raisebox{-0.3ex}{$\m@th\cdot$}}%
  }
\newcommand{\set}[1]{\{\,#1\,\}}
\newtheorem{definition}{Definition}
\newtheorem{theorem}{Theorem}
\newtheorem{example}{Example}
\newtheoremstyle{mycase}{}{}{}{}{\bf}{.}{.5em}{\thmnote{#1:}~\normalfont
  #3}
\theoremstyle{mycase}
\numberwithin{subcase}{mycase}
\newtheoremstyle{component}{}{}{}{}{\itshape}{.}{.5em}{\thmnote{#3}#1}
\theoremstyle{component}
\newcommand{\ledot}{\mathrel{\ooalign{\hss\raise.200ex\hbox{$\cdot$}\hss\cr$\le$}}}
\newcommand{\gedot}{\mathrel{\ooalign{\hss\raise.200ex\hbox{$\cdot$}\hss\cr$\ge$}}}
\newcommand\bit{\lbrace0,1\rbrace}
\newcommand\setN{\mathbb N}
\newcommand{\calF}{\ensuremath{\mathcal{F}}\xspace}
\newcommand\calR{\ensuremath{\mathcal{R}}\xspace}
\newcommand\calS{\ensuremath{\mathcal{S}}\xspace}
\newcommand{\calU}{\ensuremath{\mathcal{U}}\xspace}
\newcommand\calV{\ensuremath{\mathcal{V}}\xspace}
\newcommand\calX{\ensuremath{\mathcal{X}}\xspace}
\newcommand\calY{\ensuremath{\mathcal{Y}}\xspace}
\newcommand{\mathcmd}[1]{{\normalfont\ensuremath{#1}}\xspace}
\newcommand{\mathfun}[1]{\mathcmd{\mathit{#1}}}
\newcommand{\textop}[1]{\relax\ifmmode\mathop{\text{#1}}\else\text{#1}\fi}
\mathchardef\mhyphen="2D
\newcommand{\proto}{\mathcmd{\Pi}} 
\newcommand{\pathp}{\mathfun{path}_\proto}
\DeclareMathOperator\E E
\newcommand{\Vres}{\calV^\mathit{res}}
\newcommand{\sM}{(M,\vec u)\vDash}
\begin{document}

\author{
    Robert K\"{u}nnemann\\
CISPA, Saarland University\\
Saarland Informatics Campus
} 
\title{Sufficient and necessary causation are dual}
\maketitle
\begin{full}
    \date{}
\end{full}

\begin{abstract}
    Causation has been the issue of philosophic debate since
    Hippocrates. Recent work defines actual
    causation in terms of Pearl/Halpern's causality framework,
    formalizing necessary causes
    (IJCAI'15). This has inspired causality notions in the
    security domain  (CSF'15), 
    which, perhaps surprisingly, formalize 
    sufficient causes instead.
    We provide an explicit relation between necessary and sufficient
    causes.
\end{abstract}


\paragraph{Notation}\label{sec:notation}%
Let $\setN$ be the set of natural numbers and assume that they begin at $0$. 
%
%
\begin{full}
For indicating that  function $f$ from a set $A$ to a set $B$ is
a partial function, we write $f : A \rightharpoonup B$. 
\end{full}%
%
 $r[v \mapsto \mathit{val}] := (r\setminus (v,r(v))) \cup
(v,val)$ is short-hand for the function mapping $v$ to $\mathit{val}$
and otherwise behaving like $r$. 
%
%
%
We write $\vec t$ for a sequence $t_1,\dotsc,t_n$ if $n$ is clear from the context and denote the $i$th element with $\vec t|_i$. We use $\vec a\cdot \vec b$ to denote concatenation of vectors $\vec a$ and $\vec b$.
%
%
%
%
We filter a sequence $l$ by a set $S$, denoted $l|_S$, by removing each element
that is not in $S$. 
\begin{full}
We use similar notation for the projection of a sequence $l=s_1,\ldots,s_n \in S^*$:
given a partial function $\pi \colon S \rightharpoonup T$, 
    $l|_\pi = (s_1,\ldots,s_{n-1})|_\pi\cdot \pi(s_n)$
    or 
    $(s_1,\ldots,s_{n-1})|_\pi$, if $\pi(s_n)$ undefined.
If some tree $\proto$ (for example a CoSP protocol, cf.  Section~\ref{sec:cosp}) is
given from context, we use $\pathp(v)$ to denote the path from the
root to $v$, including $v$.
\end{full}

\section{Causal model (Review)}

We review the causal framework introduced by Pearl
and Halpern~\cite{pearl-book,DBLP:conf/ijcai/Halpern15},
also known as the \emph{structural equations model},
which provides the notion of causality 
which we will investigate for the case of security
protocols.
The causality framework models how random variables influence
each other. The set of random variables is partitioned into
a set $\calU$ of \emph{exogenous} variables, variables that are
outside the model, e.g., in the case of a security protocol, the
attack the adversary decides to mount, and
a set $\calV$ of \emph{endogenous} variables, 
which are ultimately determined by the value of the
exogenous variables. 
We call the triple consisting of $\calU$, $\calV$ and function $\calR$
associating a range to each variable $Y\in\calU \cup \calV$
a \emph{signature}. 
A causal model on this signature defines the relation between
endogenous variables and exogenous variables or other endogenous
variables in terms of a set of functions.
\begin{definition}[Causal model]\label{def:causal-model} 
    A \emph{causal model} $M$ over a signature $\calS=(\calU,\calV,\calR)$
    is a pair of said signature $\calS$ and a set of functions
    $\calF=\set{F_X}_{X\in\calV}$ such that, for each $X\in\calV$,
    \[ F_X : (\bigtimes_{U\in\calU} \calR(U) ) \times
    (\bigtimes_{Y\in\calV\setminus \set{X}} \calR(Y) ) 
\to \calR(X) \]
\end{definition}

Each causal model subsumes a \emph{causal
network}, a graph with
a node for each variable in
$\calV$, and an edge from $X$ to $Y$ iff $F_Y$
depends on $X$.
If the causal graph associated to a causal model $M$ is
acyclic, then each
setting $\vec u$ of the variables in
$\calU$ provides a unique solution to the equations in $M$. 
All causal models we will derive in this paper have this property.
We call a vector setting the variables in $\calU$
a \emph{context}, and 
a pair $(M,\vec u)$ of a causal model and a setting \emph{a
situation}.

As hinted at in the introduction, the definition of causality follows
a counterfactual approach, which requires to answer `what if'
questions. 
\begin{full}
It is not always possible to do this by observing actual outcomes.
Consider the following example.

\begin{example}[Wet ground]\label{ex:wet-ground}
The ground in front of Charlie's house is slippery when wet. Not only
does it become wet when it rains; if the neighbour's sprinkler is
turned on, the ground gets wet, too. The neighbour turns on the
    sprinkler unless it rains. Let $R\in\calU$ be 1 if it rains, and
    0 otherwise, and consider the following equations for a causal
      model on $R$ and endogenous variables $W$, $S$ and $F$ with
      range $\bit$.
      \begin{align*}
          S & = \neg R & \text{(The sprinkler is on if it does not
          rain.)}\\
          W & = R \lor S & 
          \text{(The sprinkler or the rain wets the ground.)}\\
          F & = W & \text{(Charlie falls when the ground is
          slippery.)}
      \end{align*}
\end{example}
Clearly, the ground being wet is a cause to Charlie's falling,
but we cannot argue counterfactually, because the counterfactual case
never actually occurs: the ground is always wet. We need to intervene
on the causal model.
\end{full}

\begin{definition}[Modified causal model]\label{def:modified-causal-model} 
    Given a causal model 
    $M=((\calU,\calV,\calR),\calF)$, we define the 
    \emph{modified causal model}
    $M_{\vec X \leftarrow \vec x}$ over the signature
    $\calS_{\vec X}=(\calU,\calV \setminus \vec X, \calR|_{\calV
    \setminus \vec X})$ by setting the values of each variable in $\vec
    X$ to the corresponding element $\vec x$ in each equation $F_Y \in
    \calF$, obtaining $\calF_{\vec X \leftarrow \vec x}$. Then,
    $M_{\vec X \leftarrow \vec x} = (\calS_{\vec X}, F_{\vec X \leftarrow \vec x})$.
\end{definition}

\begin{full}
We can now define how to evaluate queries on
causal models w.r.t. to interventions on a vector of variables,
which allows us to answer `what if' questions.
\end{full}


\begin{definition}[Causal formula]\label{def:basic-causal-formula} 
    A \emph{causal formula} has the form
    $[Y_1 \leftarrow y_1, \ldots, Y_n \leftarrow y_n]\varphi$
    (abbreviated $[\vec Y \leftarrow \vec y]\varphi$), where
    \begin{itemize}
        \item $\varphi$ is a boolean combination of primitive events,
            i.e., formulas of form $X=x$ for $X\in\calV$,
            $x\in\calR(X)$,
        \item $Y_1,\ldots,Y_n\in\calV\cup\calU$ distinct,
        \item $y_i\in\calR(Y_i)$.
    \end{itemize}
    We write $(M,\vec u)\vDash [\vec Y \leftarrow \vec y]\varphi$%
    \begin{full}
    ($[\vec Y\leftarrow \vec y]\varphi$ 
    is true in a causal model $M$ given
    context $\vec u$) %
    \end{full}%
    if the (unique) solution to the
    equations in $M_{\vec Y\leftarrow \vec y}$ in the context $\vec u$ is an
    element of $\varphi$.
\end{definition}

%
Furthermore, we allow intervention on exogenous
variables. This is equivalent to a modification of the context:
$(M,\vec u)\vdash [ U_i \leftarrow u_i' ] \varphi$ equals
$(M,(\vec u|^0_{i-1}\cdot u_i' \cdot \vec u|^{i+1}_n))\vdash \varphi$. 
\begin{full}
In the example we regard, however, the context models
(among other things, like non-determinism in scheduling)
uncertainty about the behaviour of an external agent, the
adversary.
In this case, the situation is more accurately described by an
exogenous variable representing the adversary's motivation to, e.g., send
a certain message, and an endogenous variable representing the message
actually being sent. The endogenous variable mirrors the sending of
the message exactly, i.e., in the model, the adversary can send any
message he wants to, and the exogenous variable appears nowhere else.
As this is equivalent to a direct intervention on the exogenous
variable, we decide to keep the model concise and avoid this
mirroring.
\end{full}%
We review Halpern's modification~\cite{DBLP:conf/ijcai/Halpern15}
of 
Pearl and Halpern's definition of actual causes~\cite{DBLP:journals/corr/abs-1301-2275}.

\begin{definition}[actual cause\processifversion{conf}{+ necessary cause}]
    \label{def:actual-cause}\processifversion{conf}{\label{def:necessary-cause}}
    $\vec X = \vec x$ is a \emph{(minimal) actual cause} of $\varphi$ in
    $(M,\vec u)$ if the following three conditions hold.
    \begin{enumerate}
        \item [AC1.] $(M,\vec u)\vDash (\vec X = \vec x) \land
            \varphi$.
        \item [AC2.] There is a set of variables
            $\vec W$ and a setting $\vec x'$ of the variables in $\vec
            X$ such that if
            $(M,\vec u) \vDash (\vec W = \vec w)$,
            then
                $(M,\vec u) \vDash 
                    [\vec X \leftarrow \vec x', 
                    \vec W \leftarrow \vec w, 
                ] \neg \varphi$.
        \item [AC3.] $\vec X$ is minimal: No strict subset 
            $\vec X'$ of $\vec X$ satisfies AC1 and AC2.
    \end{enumerate}
    We say $\vec X$ is an \textbf{actual} cause for $\varphi$ if this is the case for some $\vec x$. %
    \begin{conf}
    For a weaker AC2 as follows, we speak of a (minimal) \textbf{necessary} cause
    \begin{itemize}
        \item [NC2.] There exists $\vec x'$ such that 
            $(M,\vec u) \vDash [\vec X\leftarrow \vec x'] \neg \varphi$.
    \end{itemize}
    \end{conf}
\end{definition}

\begin{full}
The first condition, AC1, requires that both the property and the
purported cause are facts that hold in the actual trace. In
particular, something that is not true cannot be an actual cause.
The second condition, AC2, is a generalisation 
of the intuition behind Lewis'
counterfactual. Let us first consider the special case
$\vec W = ()$, which corresponds exactly to Lewis' reasoning.

\begin{definition}[necessary cause]\label{def:necessary-cause} 
    Necessary causes are defined like actual causes (see
    Definition~\ref{def:actual-cause}), but with AC2 modified as
    follows:
    \begin{itemize}
        \item [AC2'.] There exists $\vec x'$ such that 
            $(M,\vec u) \vDash [\vec X\leftarrow \vec x'] \neg \varphi$.
    \end{itemize}
\end{definition}

Intuitively, AC2' requires that there is some intervention on the
cause $\vec X$ that negates $\varphi$, i.e., $\vec X=\vec x$ is
\emph{necessary} for $\varphi$ to hold. The condition AC2 is
a weakening of this notion to be able to capture causes that are
`masked' by other events. Suppose Charlie's neighbour in
Example~\ref{ex:wet-ground} is cruel and turns on the sprinkler
only if the ground is not wet from the rain:
\begin{align*}
    \mathit{WR} & = R & 
    S & = \neg \mathit{WR}\\
    W & = \mathit{WR} \lor S & 
    F & = W
\end{align*}
In the context where it rains ($R=1$), the ground gets wet ($\mathit{WR}=1$,
$W=1$) and Charlie falls $(F=1)$, but $\mathit{WR}=1$ is not
a necessary cause
for $F=1$, since $F=1$ even if $\mathit{WR}=0$. Hence the weakening of $AC2'$
to $AC2$: once we set $\vec W=S$, we can fix $S$ to its actual value
of $0$, reflecting that in the actual chain of events, the sprinkler
was in fact not turned on. Thus 
$(M,(1))\vDash [ \mathit{WR}\leftarrow 0, S\leftarrow 0] \neg(F=0)$.
The set of variables $\vec W$ is called \emph{contigency}, as it
captures the aspects of the actual situation under which $S=1$ is
a cause.

The third condition enforces minimality. In
Example~\ref{ex:wet-ground}, 
$(S,W)=(0,1)$ is not a cause of $F=1$ in the context $R=1$,
since $W=1$ is a cause by itself (while $S=0$ is not).
In the followup, 
we will sometimes discuss actual, necessary or sufficient (cf.
Definition~\ref{def:sufficient-cause}) causes
that are not necessarily minimal, in which case only AC1 and the
respective second conditions need to hold (AC2, NC2 or SF2), but not
AC3.

\end{full}

\section{Sufficient versus necessary causes}
\label{sec:duality}

The major difference underlying actual causes according to DGKSS~\cite{Datta2015a} 
and Pearl/Halpern~\cite{DBLP:conf/ijcai/Halpern15}
is that the former considers sufficient rather than necessary 
causes.\footnote{If $\vec X = \vec x$ is an actual cause under contingency
$\vec W = \vec w$, then $\vec X\cdot\vec W = \vec x\cdot \vec w$ is a necessary cause. 
Hence actual causes are parts of necessary causes.
}
We transfer this concept to Pearl's
causation framework as follows.
\begin{definition}[sufficient cause]\label{def:sufficient-cause} 
    Sufficient causes are defined like actual causes (see
    Definition~\ref{def:actual-cause}), but with AC2 modified as
    follows:
        SF2. For all $\vec z$,
            $(M,\vec u) \vDash [\calV \setminus \vec X\leftarrow \vec z] \varphi$.
\end{definition}

\noindent
In this section, we show that sufficient causes and necessary
causes (Definition~\ref{def:necessary-cause}) are dual to each other, and that
sufficient causes are in fact preferable, as they have a clearer
interpretation of what constitutes a part of a cause.

While several formalisations of sufficient causes were proposed
\cite{Datta2015a,gossler:hal-00924048}, so far they were never
related to necessary causes.
Strongest necessary conditions and weakest sufficient conditions
in propositional logic are known to be dual to each other,
however, sufficient and necessary causes are first-order predicates,
and there is no such result in first-order logic.
Even defining these notions is problematic~\cite{DBLP:journals/ai/Lin01}. 

We fix some finite set $\Vres\subset \calV$ and some ordering
$\set{V_1,\ldots,V_n} = \Vres$. We will see in the next section why
this restriction is useful.
Let $\overline X$ denote the bitstring representation of 
$X \subseteq \Vres$ relative to $\Vres$, i.e., 
$\overline X \defeq{} (1_X(V_1),\ldots,1_X(V_n))$.
We can now represent the set of necessary causes, or more generally,
any set of sets of variables $\calX=X_1,\ldots,X_m$, as a boolean formula in disjunctive
normal form (DNF) that is true whenever $\overline X$ is the bitstring
representation of $X\subseteq\Vres$ such that $X\in \calX$.
\[ 
(\overline X=\overline X_1 \lor \overline x=\overline X_2 \lor \cdots \lor \overline X= \overline X_m), \]
where $\overline X=\overline X_i$ is a conjunction 
$\bigwedge_{j\in \setN_{n}} \overline X|_j = \overline X_i|_j$.

\begin{theorem}[sufficient and necessary
    causes]\label{thm:sufficient-necessary}
    For $\calX$ the set of (not necessarily minimal) necessary causes,
    let $\overline \calX$ be the DNF representation of $\calX$.
    Then the set of 
    (not necessarily minimal)
    sufficient causes is represented by
    $\overline \calY$, which is obtained from $\overline \calX$
    by transforming $\overline \calX$ into CNF and switching $\lor$
    and $\land$. The same holds for the other direction.
\end{theorem}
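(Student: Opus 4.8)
The plan is to reduce the theorem to a single combinatorial duality between the two families of causes, and then to check that the syntactic operation ``convert to CNF, switch $\land$ and $\lor$'' implements exactly the boolean dual. Throughout I would assume $(M,\vec u)\vDash\varphi$, since otherwise AC1 fails for every candidate and neither family contains any cause; I would also read the quantifications in SF2 and NC2 as ranging over settings of the variables in $\Vres$, which (as the next section makes precise) is the point of fixing $\Vres$. Since AC1 identifies a cause $X\subseteq\Vres$ with the assignment $\vec X=\vec x$ of the \emph{actual} values, a family of causes is just a set of subsets of $\Vres$, faithfully encoded by the indicator bitstrings $\overline X$.

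First I would record the logical shape of the two side conditions. For the family of necessary causes, let $g\colon\bit^n\to\bit$ be the boolean function whose models are exactly the bitstrings of necessary causes; by construction $\overline\calX$ computes $g$. The key observation is that switching $\land$ and $\lor$ in a formula \emph{without} negating its literals sends the computed function $h$ to $\vec b\mapsto\neg h(\neg\vec b)$, i.e. to the De~Morgan dual with the literal negations undone by the substitution $\vec b\mapsto\neg\vec b$. Converting $\overline\calX$ to an equivalent CNF still computes $g$; switching $\land$ and $\lor$ in that CNF then yields $\overline\calY$, which computes $g^{d}(\vec b)=\neg g(\neg\vec b)$. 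Since $\neg\overline X=\overline{\Vres\setminus X}$, the models of $\overline\calY$ are precisely those $X$ with $g(\overline{\Vres\setminus X})=0$, i.e. those $X$ for which $\Vres\setminus X$ is \emph{not} a necessary cause.

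It then remains to prove the core equivalence: $X$ is a sufficient cause iff $\Vres\setminus X$ is not a necessary cause. Because each modified model is acyclic, it has a unique solution, so $\neg\bigl((M,\vec u)\vDash[\vec W\leftarrow\vec w]\varphi\bigr)$ is equivalent to $(M,\vec u)\vDash[\vec W\leftarrow\vec w]\neg\varphi$. Reading SF2 for $X$ as the universal statement ``for every setting $\vec z$ of $\Vres\setminus X$, $(M,\vec u)\vDash[(\Vres\setminus X)\leftarrow\vec z]\varphi$'', its negation is exactly ``there is a $\vec z$ with $(M,\vec u)\vDash[(\Vres\setminus X)\leftarrow\vec z]\neg\varphi$'', which is NC2 for the cause $\Vres\setminus X$. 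AC1 is handled on both sides by the standing assumption $(M,\vec u)\vDash\varphi$ (and on the sufficient side it is in any case forced by instantiating SF2 at the actual values of $\Vres\setminus X$, which reproduces the actual situation). Combining this with the previous paragraph, the models of $\overline\calY$ are exactly the sufficient causes; the converse direction needs no separate argument, as $g\mapsto g^{d}$ is an involution, equivalently because the same quantifier switch shows $X$ is necessary iff $\Vres\setminus X$ is not sufficient.

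I expect the main obstacle to be the bookkeeping that lines the two conditions up as exact negations: SF2 is stated over $\calV\setminus\vec X$ whereas NC2 intervenes on the cause variables, so the argument only closes once both are confined to interventions on $\Vres$ --- this is precisely where the restriction to $\Vres$ earns its keep. The second delicate point is the boundary behaviour: verifying that the connective switch yields the dual $\neg g(\neg\,\cdot\,)$ rather than the negation $\neg g$, and checking that the assumption $(M,\vec u)\vDash\varphi$ is exactly what prevents the empty-family case (where $\calX=\emptyset$ would otherwise dualize to ``all subsets'') from breaking the correspondence.
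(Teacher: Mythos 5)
Your proof is correct and takes essentially the same route as the paper's: both arguments negate NC2/SF2 (relying on uniqueness of solutions in the modified models), apply the complement bijection $X \mapsto \Vres\setminus X$, and observe that the induced syntactic operation on the CNF is exactly the connective switch, i.e.\ the De~Morgan dual $\neg g(\neg\,\cdot\,)$. Your packaging into a semantic lemma ($X$ is sufficient iff $\Vres\setminus X$ is not necessary) plus a syntactic lemma, together with your explicit handling of AC1 and of the $\calV$-versus-$\Vres$ reading of SF2, merely makes explicit steps that the paper carries out (or silently glosses over) inside one chain of equivalences.
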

\begin{proof}
    By Definition~\ref{def:necessary-cause}, NC2, we can rephrase the assumption as follows:
    $ \forall X. \exists x'.\ \sM [X\leftarrow x'] \neg\varphi
    \iff
      X\in\calX$. Now the right-hand side
    is equivalent to 
    $(\overline X = \overline X_1) \lor \cdots \lor (\overline X= \overline X_m)$.
    This is a boolean function over $\set{0,1}^n$. As any boolean
    function can be transformed into canonical CNF,
    the right-hand side can be expressed as
    $c_1 \land \cdots \land c_{k}$ with conjuncts $c_i$
        of form $\bigvee_{j\in\setN_{n}} (\neg) \overline X|_j$.
\begin{align*}
        \forall X. c_1 \land \cdots \land c_{k} &\iff \exists x'.\ \sM [X\leftarrow x'] \neg\varphi \\
        \intertext{Now we can negate both sides of the implication.}
        \forall X.  \neg c_1 \lor \cdots \lor \neg c_{k} &\iff \forall x'.\ \sM [X\leftarrow x'] \varphi.
\end{align*}
    We rename $X$ to $Z$ and $x'$ to $z'$. Let $\{^b / _a\}$ denote
    $b$ literally replacing $a$.
\begin{multline*}
        \forall Z.  \neg c_1 \left\{^{\overline Z} / _{\overline X}\right\}  \lor \cdots \lor \neg c_{k}\left\{^{\overline Z} / _{\overline X}\right\} 
        \\ \iff \forall z'.\ \sM [Z\leftarrow z'] \varphi
    \end{multline*}
        We can replace $Z$ by $X=\Vres\setminus Z$,
        as $Z \mapsto \Vres\setminus Z$ is
        is a bijection between the domain of $Z$ and the domain
        of $X$.
        Thus
        \begin{multline*}
        \forall X.  \neg c_1 \left\{^{\neg \overline X} / _{\overline X}\right\}  \lor \cdots \lor \neg c_{k}\left\{^{\neg \overline X} / _{\overline X}\right\} 
        \\ \iff \forall z'.\ \sM [\Vres\setminus X \leftarrow z'] \varphi. 
        \end{multline*}
    As each conjunct $c_i$ is a disjunction, the negation of $c_i$
    with $\overline X$ substituted by $\neg \overline X$ can be
    obtained by switching $\lor$ and $\land$. The resulting term is,
    again, a boolean formula in DNF, so $\overline \calX$ transforms
    into $\calX$ easily. 
    The reverse direction follows by first applying the above bijection and
    renaming backwards, and then following the first proof steps.
\end{proof}

To obtain the set of minimally sufficient causes from the set of
minimally necessary causes, one saturates the former by adding all
non-minimal elements (pick an element, and add its supersets by
iteratively switching all zeros to ones until a fixed point is reached)
and computes the set of (not-necessarily minimal) elements using the
above method.  The conversion to CNF can be performed via the
Quine–McCluskey algorithm, which is the obvious bottleneck in this
computation. Finally, the resulting set representation can be
minimised by removing all elements $X$ such that $\neg X \land Y$ for
some element in $Y$ (where $\neg$ and $\land$ are applied bitwise).

\begin{table}
    \centering
    \small
    \begin{tabular}{lp{2.8cm}p{2.8cm}}\toprule
        & necessary & sufficient \\\midrule
        \emph{Conj}$(1,1)$ & 
        $(A),(B)$ & 
        $(A,B)$ 
        \\
        \emph{Disj}$(1,1)$ & 
        $(A,B)$ & 
        $(A),(B)$ 
        \\
        \bottomrule
    \end{tabular}
    \caption{Comparison: set of all (minimal) necessary/sufficient causes. }\label{tab:comp}
\end{table}

\bibliography{references}
\bibliographystyle{named}

\end{document}